\newcommand {\rr} {k}
\newcommand {\rf} {\mathit{rank}}
\newcommand {\lingconc} {\mathcal{S}}
\newcommand {\ent} {\mathrel{{\scriptstyle\mid\!\sim}}}
\newcommand {\nott} {\lnot}
\newcommand {\esiste} {\exists}
\newcommand {\sx} {\langle}
\newcommand {\dx} {\rangle}
\newcommand {\incluso} {\subseteq}
\newcommand {\emme} {\mathcal{M}}
\newcommand {\unione} {\cup}
\newcommand {\tc} {\mid}
\newcommand {\vuoto} {\emptyset}
\newcommand {\WW} {\mathcal{W}}
\newcommand{\tip}{{\bf T}}
\newcommand{\alc}{\mathcal{ALC}}
\newcommand{\alct}{\mathcal{ALC}+\tip}
\newcommand{\alctr}{\mathcal{ALC}^{\Ra}\tip}
\newcommand{\modelsalctr}{\models_{\scriptsize\mathcal{ALC}^{\scriptsize{\Ra}}\tip}}
\newcommand{\alctp}{\alct}
\newcommand{\alctrm}{\mathcal{ALC}^{\Ra}_{min}\tip}
\newcommand{\be}{\begin{enumerate}}
\newcommand{\ee}{\end{enumerate}}
\newcommand{\hide}[1]{}
\newcommand{\subsud}{\utilde{\sqsubset}}
\def \cases{\left \{\begin{array}{l}}
\def \endcases{\end{array}\right .}
\newcommand {\CL} {{\bf CL}}
\newcommand {\Cu} {{\bf C}}
\newcommand {\Pe} {{\bf P}}
\newcommand {\Ra} {{\bf R}}
\newcommand {\ri} {\rightarrow}
\newcommand {\bes} {\begin{description}}
\newcommand{\ens} {\end{description}}
\newcommand {\beq} {\begin{quote}}
\newcommand {\enq} {\end{quote}}
\newcommand {\bit} {\begin{itemize}}
\newcommand {\enit} {\end{itemize}}
\newcommand{\mint}{\sqcap}
\newenvironment{proof}{\noindent {\bf Proof.}} {}
\newtheorem{theorem}{Theorem}[section]
\newtheorem{proposition}[theorem]{Proposition}
\newtheorem{definition}[theorem]{Definition}
\newtheorem{example}[theorem]{Example}
\begin{document}
\creationEntete

\begin{abstract} 
We define the notion of rational closure in the context of Description Logics extended with a tipicality operator.  We start from $\alct$, an extension of $\alc$ with a typicality operator $\tip$: intuitively  allowing to express concepts of the form $\tip(C)$, meant to select the ``most normal'' instances of a concept $C$.  The semantics we consider is based on rational model. But we further restrict  the semantics to minimal models, that is to say, to models that minimise the rank of domain elements.  
We show that this semantics captures exactly a notion of rational closure which is a natural extension to Description Logics of 
Lehmann and Magidor's original one.
We  also extend the notion of rational closure to the Abox component. We  provide an  \textsc{ExpTime} algorithm for computing the rational closure of an Abox and we show that it is sound and complete with respect to the minimal model semantics. 
\end{abstract}

\section{Introduction}
\vspace{-0.1cm}
Recently, in the domain of Description Logics (DLs) a large amount of work has been done in order to extend the basic formalism with nonmonotonic reasoning features. The aim of these extensions is to reason about prototypical properties of individuals or classes of individuals. In these extensions one can represent for instance knowledge expressing the fact that the heart is usually positioned in the left-hand side of the chest, with the exception of people with \emph{situs inversus}, that have the heart positioned in the right-hand side. Also, one can infer that an individual enjoys all the typical properties of the classes it belongs to. So, for instance, in the absence of information that someone has situs inversus, one would assume that it has the heart positioned in the left-hand side. A further objective of these extensions is to allow to reason about defeasibile properties and inheritance with exceptions. As another example, consider the standard penguin example, in which typical birds fly, however penguins are birds that do not fly. Nonmonotonic extensions of DLs allow to attribute to an individual the typical properties of the most specific class it belongs to. In this example, when knowing that Tweety is a bird, one would conclude that it flies, whereas when discovering that it is also a penguin, the previous inference is retracted, and the fact that Tweety does not fly is concluded.

In the literature of DLs, several proposals have appeared \cite{Straccia93,bonattilutz,baader95b,donini2002,kesattler,Casinistraccia2010,AIJ,sudafricaniKR,hitzlerdl,eiter2004,rosatiacm}.
However, finding a solution to the problem of extending DLs for reasoning about prototypical properties seems far from being solved.

In this paper, we introduce a general framework for nonmonotonic reasoning in DLs based on  (i) the use of a typicality operator \tip;
  (ii) a minimal model mechanism (in the spirit of circumscription).
The typicality operator \tip, introduced in \cite{FI09}, allows to directly express typical properties such as $\tip(\mathit{HeartPosition}) \sqsubseteq \mathit{Left}$, $\tip(\mathit{Bird}) \sqsubseteq \mathit{Fly}$, and $\tip(\mathit{Penguin}) \sqsubseteq \nott \mathit{Fly}$,
\hide{\begin{quote}
  $\tip(\mathit{HeartPosition}) \sqsubseteq \mathit{Left}$
\end{quote}
   or
\begin{quote}
$\tip(\mathit{Bird}) \sqsubseteq \mathit{Fly}$ \\
$\tip(\mathit{Penguin}) \sqsubseteq \nott \mathit{Fly}$ 
\end{quote}
}
 whose intuitive meaning is that normally, the heart is positioned in the left-hand side of the chest, that typical birds fly, whereas penguins do not.
The \tip \ operator is intended to enjoy the well-established properties of preferential semantics, described by Kraus Lehmann and Magidor (henceforth KLM)
in their seminal work 
\cite{KrausLehmannMagidor:90,whatdoes}. KLM proposed an
axiomatic approach to nonmonotonic reasoning, and individuated two systems,
preferential logic  \textbf{P} and rational logic \textbf{R}, and their corresponding semantics\hide{\footnote{Actually KLM introduced also very weak logics \Cu \ and \CL, that will be not considered here.}}. It is commonly accepted that the systems \Pe \ and \Ra \ express the core
properties of nonmonotonic reasoning. 

In \cite{AIJ,ijcai2011} nonmonotonic extensions of DLs based on the \tip \ operator have been proposed. In these extensions, the semantics of \tip \ is based on preferential logic \Pe. Nonmonotonic inference is obtained by restricting entailment to \emph{minimal models}, where minimal models are those  that minimise the truth of formulas of a special kind. In this work, we  present an alternative and more general approach. First, in our framework the semantics underlying the \tip \ operator is not fixed once for all: although we consider here only KLM's \Pe \ or \Ra \ as underlying semantics,  in principle one might choose any other underlying semantics for \tip \ based on a modal preference relation. Moreover and more importantly,  we adopt a minimal model semantics, where, as a difference with the previous approach, the notion of minimal model is completely independent from the language and  is  determined only by the relational structure of models. 

The semantic approach to nonmonotonic reasoning in  DLs presented in this work is an extension of the one  described in \cite{nmr2012} within a propositional context. We then propose a rational closure construction for DL extended with the \tip \ operator as an algorithmic  counterpart of our minimal model semantics, whenever the underlying  logic for  \tip \ is KLM logic \Ra. \emph{Rational closure} is a well-established notion introduced in \cite{whatdoes} as a nonmonotonic mechanism built on the top of \Ra \ in order to perform some further truthful nonmonotonic inferences that are not supported by \Ra \ alone. We extend it to DLs in a natural way, so that, in turn, we can see our minimal model semantics as a semantical reconstruction of rational closure.   


More in details,  we take  $\alct$  as the underlying DL and we define a nonmonotonic inference relation on the top of it 
 by restricting entailment  to minimal models: they are those ones which minimize the \emph{rank of domain elements}  by keeping
fixed the extensions of concepts and roles.
We then proceed to extend in a natural way the propositional construction of rational closure to  $\alct$ for inferring defeasible subsumptions from the TBox (TBox reasoning). Intuitively the rational closure construction amounts to assigning a \emph{rank}  (a level of exceptionality) to every concept; this rank  is  used to evaluate defeasible inclusions of the form $\tip (C) \sqsubseteq D$: the inclusion is supported by the rational closure whenever the rank of $C$ is strictly smaller than the one of $C \mint \neg D$.
Our goal is to link  the rational closure of a TBox to its  minimal model semantics, but in general it is not possible. The reason is that the  minimal model semantics is not tight enough to support the inferences provided by the rational closure. 
However we can obtain an exact corresponce between the two if we further restrict the minimal model semantics to \emph{canonical models}: these are models  that satisfy by means of a distinct element each intersection $(C_1\mint \ldots \mint C_n)$ of concepts drawn from the KB that is  satisfiable with respect to the TBox.

We then tackle the problem of extending the rational closure to ABox reasoning: we would like to ascribe defeasible properties to individuals. The  idea is to  maximise the typicality of an individual: the more is ``typical'', the more it inherits the defeasible properties of the classes it belongs too (being a typical member of them). We obtain this by  minimizing its rank (that is, its level of exceptionality), however, because of the interaction between individuals (due to roles) it is not possible to assign a unique minimal rank to each individual and  alternative minimal ranks must be considered. 
We end up with a kind of \emph{skeptical} inference with respect the ABox. We  prove  that it is sound and complete with respect to the minimal model semantics restricted to canonical models. 

The rational closure construction that we propose has not just a theoretical interest and a simple minimal model semantics, we show that it is also \emph{feasible} since its complexity is ``only'' \textsc{ExpTime} in the size of the knowledge base (and the query), thus not worse than the underlying monotonic logic. In this respect it is less complex than other approaches to nonmonotonic reasoning in DLs \cite{AIJ,bonattilutz} and comparable in complexity with the approaches in \cite{Casinistraccia2010,sudafricanisemantica,rosatiacm}, and thus a  good candidate to define  effective nonmonotonic extensions of DLs.

\section{The  operator $\tip$ and the General Semantics}\label{sez:semantica}
\vspace{-0.1cm}

Let us briefly recall the DLs $\alctp$ and $\alctr$ introduced in \cite{FI09,ecai2010DLs}, respectively. The intuitive idea is to extend the standard $\alc$ allowing concepts of the form $\tip(C)$, where $C$ does not mention $\tip$, whose intuitive meaning is that
$\tip(C)$ selects the {\em typical} instances of a concept $C$. We can therefore distinguish between the properties that
hold for all instances of concept $C$ ($C \sqsubseteq D$), and those that only hold for the typical
instances of $C$  ($\tip(C) \sqsubseteq D$) that we call  \tip-inclusions, where $C$ is a concept not mentioning $\tip$.
Formally, the language is defined as follows. 
 
 \begin{definition}\label{defelt}
 We consider an alphabet of concept names $\mathcal{C}$, of role names
$\mathcal{R}$, and of individual constants $\mathcal{O}$.
Given $A \in \mathcal{C}$ and $R \in \mathcal{R}$, we define
 $C_R:= A \tc \top \tc \bot \tc  \nott C_R \tc C_R \sqcap C_R \tc C_R \sqcup C_R \tc \forall R.C_R \tc \exists R.C_R$, and
   $C_L:= C_R \tc  \tip(C_R)$.
    A KB is a pair (TBox, ABox). TBox contains a finite set
of  concept inclusions  $C_L \sqsubseteq C_R$. ABox
contains assertions of the form $C_L(a)$ and $R(a,b)$, where $a, b \in
\mathcal{O}$.
\end{definition}

\noindent The semantics of $\alctp$ and $\alctr$  is defined respectively 
in terms of
preferential and rational\footnote{We use the expression
``rational model'' rather than ``ranked model'' which is also used
in the literature in order to avoid any confusion with the notion
of rank used in rational closure.} models:
 ordinary models of $\alc$ are equipped by a \emph{preference relation} $<$ on
the domain, whose intuitive meaning is to compare the ``typicality''
of domain elements, that is to say $x < y$ means that $x$ is more typical than
$y$. Typical members of a concept $C$, that is members of
$\tip(C)$, are the members $x$ of $C$ that are minimal with respect
to this preference relation (s.t. there is no other member of $C$
more typical than $x$). Preferential models, in which the preference relation $<$ is irreflexive and transitive,  characterize the logic $\alctp$,
whereas the more restricted class of rational models, so that $<$ is further assumed to be modular, characterizes
$\alctr$.

\begin{definition}[Semantics of $\alctp$]\label{semalctp} A model $\emme$ of $\alctp$ is any
structure $\langle \Delta, <, I \rangle$ where: $\Delta$ is the
domain;   $<$ is an irreflexive and transitive relation over
$\Delta$ that satisfies the following \emph{Smoothness Condition}:
for all $S \subseteq \Delta$, for all $x \in S$, either $x \in
Min_<(S)$ or $\exists y \in  Min_<(S)$ such that $y < x$, where
$Min_<(S)= \{u: u \in S$ and $\nexists z \in S$ s.t. $z < u \}$; $I$ is the extension function that maps each
concept $C$ to $C^I \subseteq \Delta$, and each role $R$
to  $R^I \subseteq \Delta^I \times \Delta^I$. For concepts of
$\alc$, $C^I$ is defined in the usual way. For the $\tip$ operator, we have
$(\tip(C))^I = Min_<(C^I)$.
\end{definition}

\begin{definition}[Semantics of $\alctr$]\label{semalctr} A model $\emme$ of $\alctr$ is an $\alctp$ model as in Definition \ref{semalctp} in which $<$ is
further assumed to be {\em modular}: for all $x, y, z \in \Delta$, if
$x < y$ then either $x < z$ or $z < y$. 
\end{definition}

\begin{definition}[Model satisfying a Knowledge Base]\label{Def-ModelSatTBox-ABox}
Given a model $\emme$, $I$ is extended  to assign a distinct element\footnote{We assume the well-established \emph{unique name assumption}.} $a^I$ of the domain $\Delta$ to each individual constant $a$ of $\mathcal{O}$.
$\emme$ satisfies a knowledge base $K$=(TBox,ABox), if it satisfies both its TBox
and its ABox, where: - $\emme$  satisfies TBox if for all  inclusions $C \sqsubseteq D$  in TBox, it holds $C^I \subseteq D^I$;
- $\emme$ satisfies ABox  if:
(i) for all $C(a)$  in ABox,  $a^I \in C^I$, (ii) for all $aRb$ in
ABox,  $(a^I,b^I) \in R^I$.
\end{definition}

\noindent In \cite{FI09} it has been shown that reasoning in $\alctp$ is \textsc{ExpTime} complete, that is to say adding the $\tip$ operator does not affect the complexity of the underlying DL $\alc$. We are able to extend the same result also for $\alctr$ (we omit the proof due to space limitations):

\begin{theorem}[Complexity of $\alctr$]\label{complexityalctr}
Reasoning in $\alctr$ is \textsc{ExpTime} complete.
\end{theorem}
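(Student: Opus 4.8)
The plan is to prove the two inclusions separately: \textsc{ExpTime}-hardness and membership in \textsc{ExpTime}, exploiting the fact that $\alctr$ sits on top of $\alc$ and is obtained from $\alctp$ by the single extra frame condition of modularity (Definition~\ref{semalctr}).

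\textbf{Lower bound.} First I would observe that $\alctr$ conservatively extends $\alc$. A KB $K$ in which $\tip$ does not occur is an ordinary $\alc$ KB, and $K$ is $\alctr$-satisfiable if and only if it is $\alc$-satisfiable: given any $\alc$ model one obtains an $\alctr$ model by taking $<$ to be the empty relation, which is vacuously irreflexive, transitive, modular and smooth; conversely the $\langle \Delta, I\rangle$-reduct of any $\alctr$ model is an $\alc$ model, and since $\tip$ is absent the extension of every concept is the same in both. As concept satisfiability with respect to a general TBox is already \textsc{ExpTime}-hard for $\alc$, the same bound is inherited by $\alctr$.

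\textbf{Upper bound.} For membership the idea is to internalise the typicality operator by a box modality and to reuse the machinery developed for $\alctp$ in \cite{FI09}. I would introduce a modality $\bbox$ read over the preference relation, with $x \in (\bbox C)^I$ iff $y \in C^I$ for every $y < x$, and rewrite each occurrence of $\tip(C)$ in $K$ as $C \sqcap \bbox\,\nott C$; by Definition~\ref{semalctr} this rewriting is truth-preserving, so it reduces $\alctr$ reasoning to KB satisfiability in an $\alc$-based modal description logic whose accessibility relation must be irreflexive, transitive, smooth and \emph{modular}. Transitivity and smoothness (the latter amounting, over transitive relations, to converse well-foundedness, i.e.\ a L\"ob-style constraint) are exactly the frame conditions already handled for $\alctp$, so the only genuinely new ingredient is modularity.

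The main obstacle is therefore to bring modularity under control without leaving \textsc{ExpTime}. Here I would exploit the structural fact that an irreflexive, transitive and modular relation is \emph{ranked}: incomparability-or-equality is an equivalence relation, and $<$ induces a strict linear order on its classes, so the domain splits into layers $L_0, L_1, \dots$ with $x < y$ iff the layer of $x$ precedes that of $y$. Using this I would establish a bounded-rank finite model property: if the rewritten KB is satisfiable in a rational model, then it is satisfiable in one whose number of distinct ranks is bounded polynomially in the size of $K$ (the relevant rank-distinguishing concepts being polynomially many). Such layered models can be captured by a fixed family of inclusion axioms, giving a polynomial reduction to satisfiability in a modal DL for which an \textsc{ExpTime} decision procedure is available by the same filtration/automaton argument as for $\alctp$; combining this with the lower bound yields \textsc{ExpTime}-completeness. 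The delicate point on which I expect to spend most effort is verifying that the layering axioms, together with the L\"ob-style constraint enforcing smoothness, faithfully capture modularity and do not spuriously exclude models, i.e.\ that the encoding is both sound and complete for rational satisfiability.
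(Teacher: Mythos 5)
You cannot actually be compared against the paper's argument here: the paper explicitly omits its proof of Theorem~\ref{complexityalctr} for space reasons, so your proposal has to stand on its own. Your lower bound does: when $\tip$ does not occur, the empty preference relation is vacuously irreflexive, transitive, modular and smooth, so $\alctr$ is a conservative extension of $\alc$ and inherits \textsc{ExpTime}-hardness of subsumption w.r.t.\ general TBoxes. Your two structural observations for the upper bound are also correct: $\tip(C)$ is equivalent to $C \sqcap \bbox\,\nott C$ over the preference relation, and an irreflexive, transitive, modular relation is ranked; moreover your bounded-rank claim is true and provable by a collapse argument (let $R$ be the set of ranks $k_\emme(C_i)$ of the concepts $C_i$ occurring under $\tip$, and replace each rank $j$ by $f(j)=|\{r \in R : r<j\}|$; monotonicity of $f$ together with the fact that $j>r_i$ implies $f(j)>f(r_i)$ shows every $Min_<(C_i^I)$ is preserved, leaving at most $n+1$ ranks).

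The genuine gap is the last step, exactly where you place it, and your route makes it harder than necessary. Adding ``layering axioms'' to a modal DL whose accessibility relation is only required to be irreflexive, transitive and smooth does not force that relation to be the full ranked order: the axioms can at best make $<$ a subrelation of the layer order, and over a subrelation elements can be \emph{spuriously} minimal in $C^I$, so $(C \sqcap \bbox\,\nott C)^I$ is too large. For TBox inclusions $\tip(C)\sqsubseteq D$ (where $\tip$ occurs positively on the left) this enlargement is harmless, but the reduction must also handle ABox assertions $\tip(C)(a)$ and refutation queries (deciding $K \modelsalctr \tip(C)\sqsubseteq D$ by asserting $\tip(C)(b)$, $\nott D(b)$ for a fresh $b$), and there spurious minimality genuinely breaks the back-translation into a rational model. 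Proving that this cannot affect satisfiability \emph{is} the theorem, and it is the part you leave unverified. The clean way to finish is to drop the modality entirely once ranks are bounded: guess a rank $\rho(i)\in\{0,\dots,n,\infty\}$ for each of the $n$ concepts $C_i$ under $\tip$; introduce fresh atomic concepts $Q_0,\dots,Q_n$ with partition axioms; replace $\tip(C_i)$ by $C_i \sqcap Q_{\rho(i)}$; add $C_i \sqsubseteq Q_{\rho(i)} \sqcup \dots \sqcup Q_n$ (and $C_i \sqsubseteq \bot$ if $\rho(i)=\infty$), plus a fresh ABox witness $(C_i \sqcap Q_{\rho(i)})(b_i)$ whenever $\rho(i)\neq\infty$. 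Each guess yields a plain $\alc$ knowledge base that is satisfiable iff the original one has a rational model realizing that rank profile, and there are at most $(n+2)^n$ guesses, i.e.\ exponentially many, each decidable in \textsc{ExpTime}, so the total stays in \textsc{ExpTime}. Note this is \emph{not} the single polynomial reduction you announce --- the exponential guessing is where modularity is actually paid for --- but it suffices for the stated bound.
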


\noindent From now on, we restrict our attention to $\alctr$ and to finite models. Given a knowledge base $K$ and an inclusion $C_L \sqsubseteq C_R$, we say that it is derivable from $K$ (we write $K \modelsalctr C_L \sqsubseteq C_R$) if $C_L^I \subseteq C_R^I$ holds in all models $\emme=\sx \Delta, <, I\dx$ satisfying $K$.

\begin{definition}\label{definition_height}
The rank $k_\emme $  of a domain element $x$ in $\emme$ is the
length of the longest chain $x_0 < \dots < x$ from $x$
to a $x_0$ such that for no ${x'}$ it holds that ${x'} < x_0$.
\end{definition}

\noindent \hide{Notice that in an $\alctr$ model $\langle \Delta, <, I \rangle$, $k_\emme$ is uniquely determined. Moreover,} Finite 
$\alctr$ models can be equivalently defined by postulating the existence of
a function $k: \Delta \rightarrow \mathbb{N}$, and then letting  $x < y$ iff
$k(x) < k(y)$.

\begin{definition}\label{definition_height_formula}
Given a model $\emme=\langle \Delta, <, I \rangle$, 
the rank $k_\emme(C_R)$ of a concept $C_R$ in $\emme$ is $i = min\{k_\emme(x):
x \in C_R^I\}$. If $C_R^I=\vuoto$, then
$C_R$ has no rank and we write $k_\emme(C_R)=\infty$.
\end{definition}

\noindent It is immediate to verify that:

\begin{proposition}\label{Truth conditions conditionals with height}
For any $\emme=\langle \Delta, <, I \rangle$, we
have that $\emme$ satisfies $\tip(C) \sqsubseteq D$ iff $k_\emme(C \sqcap D) < k_\emme(C
\sqcap \nott D)$.
\end{proposition}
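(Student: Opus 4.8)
The plan is to prove both directions of the biconditional by unwinding the definitions of $\tip$-satisfaction and concept rank. By Definition~\ref{semalctp}, $\emme$ satisfies $\tip(C) \sqsubseteq D$ exactly when $(\tip(C))^I \subseteq D^I$, that is, when $Min_<(C^I) \subseteq D^I$. So the whole claim reduces to showing that $Min_<(C^I) \subseteq D^I$ holds if and only if $k_\emme(C \mint D) < k_\emme(C \mint \nott D)$. I would carry out the argument by relating the minimal elements of $C^I$ to the ranks of the two concepts $C \mint D$ and $C \mint \nott D$.

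First I would record the basic observation that, since we work with finite $\alctr$ models and $<$ is induced by a rank function $k$ (as noted after Definition~\ref{definition_height}), an element $x \in C^I$ lies in $Min_<(C^I)$ precisely when $k_\emme(x)$ equals the minimal rank attained on $C^I$, i.e. $k_\emme(x) = k_\emme(C)$. Next I would note the key arithmetic fact that $k_\emme(C) = \min\{k_\emme(C \mint D),\, k_\emme(C \mint \nott D)\}$, which follows from $C^I = (C \mint D)^I \cup (C \mint \nott D)^I$ together with Definition~\ref{definition_height_formula} and the convention $k_\emme = \infty$ on empty extensions. This splits $C^I$ into the $D$-part and the $\nott D$-part, and the minimal rank over $C^I$ is the smaller of the two minima.

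For the forward direction, suppose $Min_<(C^I) \subseteq D^I$. Every minimal element of $C^I$ has rank $k_\emme(C)$ and lies in $D^I$, hence in $(C \mint D)^I$, so $k_\emme(C \mint D) = k_\emme(C)$. I must then argue $k_\emme(C \mint \nott D) > k_\emme(C)$: any $y \in (C \mint \nott D)^I$ lies in $C^I$ but, being outside $D^I$, cannot be a minimal element of $C^I$, so by the Smoothness Condition there is a strictly more typical element of $C^I$ below it, giving $k_\emme(y) > k_\emme(C)$; thus $k_\emme(C \mint \nott D) > k_\emme(C) = k_\emme(C \mint D)$ (this also covers the case $(C \mint \nott D)^I = \vuoto$, where the rank is $\infty$). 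Conversely, if $k_\emme(C \mint D) < k_\emme(C \mint \nott D)$, then by the arithmetic fact $k_\emme(C) = k_\emme(C \mint D)$, and any $x \in Min_<(C^I)$ satisfies $k_\emme(x) = k_\emme(C) < k_\emme(C \mint \nott D)$, so $x \notin (C \mint \nott D)^I$; since $x \in C^I$, this forces $x \in D^I$, establishing $Min_<(C^I) \subseteq D^I$.

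I expect the main delicate point to be the forward direction's claim that elements of the $\nott D$-part have \emph{strictly} larger rank than the minimum over $C^I$, rather than merely non-minimal rank. This is exactly where the Smoothness Condition is needed: without it one could imagine an infinite descending chain with no minimal element, but finiteness plus smoothness guarantee that every non-minimal element of $C^I$ sits strictly above some genuine minimal element, so its rank strictly exceeds $k_\emme(C)$. Handling the degenerate cases where $C^I$, $(C \mint D)^I$, or $(C \mint \nott D)^I$ is empty (via the $\infty$ convention) is routine but should be mentioned so the $\min$ computation and the strict inequalities remain valid throughout.
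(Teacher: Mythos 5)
The paper offers no actual proof of this proposition (it is introduced with ``It is immediate to verify that''), so there is no official argument to compare against step by step; your proof is the natural unwinding of the definitions, and for the main case $C^I \neq \vuoto$ it is correct and complete. The three ingredients are exactly right: identifying $Min_<(C^I)$ with the elements of $C^I$ of rank $k_\emme(C)$ (legitimate because the paper restricts attention to finite $\alctr$ models, where $<$ is induced by a rank function -- this uses modularity, and would fail in a merely preferential model), the decomposition $k_\emme(C) = \min\{k_\emme(C \sqcap D), k_\emme(C \sqcap \nott D)\}$, and the appeal to smoothness to turn ``non-minimal'' into a strict rank inequality.

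There is, however, one genuine flaw: your closing claim that the degenerate cases are ``routine'' and that ``the strict inequalities remain valid throughout'' is false for the case $C^I = \vuoto$. There, $Min_<(C^I) = \vuoto \subseteq D^I$, so $\emme$ \emph{does} satisfy $\tip(C) \sqsubseteq D$, yet $k_\emme(C \sqcap D) = k_\emme(C \sqcap \nott D) = \infty$, and $\infty < \infty$ is false under the usual reading of $<$ on $\mathbb{N} \cup \{\infty\}$. So the left-to-right direction of the biconditional fails in precisely this case: the statement is exactly true only if one assumes $C^I \neq \vuoto$, or adds the disjunct ``or $k_\emme(C) = \infty$'', or stipulates a convention making the inequality hold when both sides are $\infty$. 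This is not a quibble you can wave away as routine; the paper itself acknowledges the issue at the syntactic level, since Definition \ref{def:rational closureDL} defines the rational closure with the explicit extra disjunct $\rf(C) = \infty$ rather than relying on the rank inequality alone. Your proof becomes fully correct once you state the needed convention or carve out the empty-extension case explicitly instead of asserting that the inequalities survive it.
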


\noindent As already mentioned, although the typicality operator $\tip$ itself  is nonmonotonic (i.e.
$\tip(C) \sqsubseteq D$ does not imply $\tip(C \sqcap E)
\sqsubseteq D$), the logics $\alctp$ and $\alctr$ are monotonic: what is inferred
from $K$ can still be inferred from any $K'$ with $K \subseteq K'$. In order to define a nonmonotonic
entailment we introduce the second ingredient of our minimal model semantics.
As in \cite{AIJ}, we strengthen the semantics  by
restricting entailment to a class of minimal (or preferred)
models, more precisely to models that
minimize \emph{the rank of worlds}. Informally, given two models of
$K$, one in which a given $x$ has rank 2 (because for instance
$z < y < x)$ , and another in which it has rank 1 (because only
$y < x$), we would prefer the latter,
as in this model $x$ is ``more normal'' than in the former.
We call the new logic $\alctrm$.

 Let us  define the notion of \emph{query}.
Intuitively, a query is either an inclusion relation or an assertion of the ABox, and we want to check whether it is entailed from a given KB.

\begin{definition}[Query]\label{def:query}
 A {\em query} $F$ is either an assertion $C_L(a)$ or an inclusion relation $C_L \sqsubseteq C_R$. Given a model $\emme = \langle \Delta, <, I \rangle$, a query $F=C_L(a)$ holds in $\emme$ if $a^I \in C_L^I$, whereas a query $F=C_L \sqsubseteq C_R$ holds in $\emme$ if $C_L^I \subseteq C_R^I$.
 \end{definition}

In analogy with circumscription, there are mainly two  ways of
comparing models with the same domain: 1) by keeping the valuation
function fixed (only comparing $\emme$ and $\emme'$ if $I$ and
$I'$ in the two models respectively coincide); 2) by also
comparing $\emme$ and $\emme'$ in  case  $I \neq I'$. In this work we consider
 the semantics resulting from the first
alternative, whereas we leave the study of the other one for future work (see Section \ref{sez:conclusions} below).
The semantics we introduce is a \emph{fixed interpretations minimal semantics},
for short $\mathit{FIMS}$.

\begin{definition}[$\mathit{FIMS}$]\label{Preference between models in case of fixed
valuation} Given $\emme = \langle \Delta, <, I \rangle$ and $\emme' =
\langle \Delta', <', I' \rangle$ we say that $\emme$ is preferred to
$\emme'$ \hide{with respect to the fixed interpretations minimal
semantics} ($\emme <_{\mathit{FIMS}} \emme'$) if $\Delta = \Delta'$, $I =
I'$, and for all $x \in \Delta$, $ k_\emme(x) \leq k_{\emme'}(x)$ whereas
there exists $y \in \Delta$ such that $ k_\emme(y) < k_{\emme'}(y)$. 

Given a knowledge base $K$, we say that
$\emme$ is a minimal model of $K$ with respect to $<_{\mathit{FIMS}}$ if it is a model satisfying $K$ and  there is no 
$\emme'$ model satisfying $K$ such that $\emme' <_{\mathit{FIMS}} \emme$. 
\end{definition}

\noindent Next, we extend the notion of minimal model by also taking into account the individuals named in the ABox.

\begin{definition}[Model minimally satisfying $K$]\label{model-minimally-satisfying-k}
Given $K$=(TBox,ABox), let $\emme = \langle \Delta, <, I \rangle$ and $\emme' =
\langle \Delta', <', I' \rangle$ be two models of $K$ which are minimal w.r.t. Definition \ref{Preference between models in case of fixed
valuation}. We say that $\emme$ is preferred to $\emme'$ with respect to ABox ($\emme <_{\mathit{ABox}} \emme'$) if for all individual constants $a$ occurring in ABox, $k_\emme(a^I) \leq k_\emme(a^{I'})$ and there is at least one individual constant $b$ occurring in ABox such that  $k_\emme(b^I) < k_\emme(b^{I'})$. $\emme$ minimally satisfies $K$ in case there is no $\emme'$ satisfying $K$ such that $\emme' <_{\mathit{ABox}} \emme$.
\end{definition}

\noindent We say that
$K$ minimally entails a query $F$ ($K \models_{\mathit{min}} F$) if $F$ holds in all models
 that minimally satisfy $K$.

\hide{
\noindent Given a model $\emme
=\sx \Delta, <, I \dx$ and $x\in \Delta$, we define
$S_x = \{\tip(C) \sqsubseteq D\in \mbox{TBox} \mid x \in \tip(C)^I \ \mbox{and} \ x \not \in D^I\}$.

\noindent The following theorem shows that we can
characterize minimal models with fixed interpretations
in terms of conditionals that are falsified by a world.
Intuitively minimal models are those where the worlds of rank
$0$ satisfy all conditionals, and the rank ($>0$) of a world $x$ is
determined by the rank $\rr_\emme(C)$ of the antecedents $C$ of conditionals \emph{falsified} by $x$.  

\begin{proposition}\label{proposition nicola}
Let $K$ be a knowledge base and $\emme$ a model, then $\emme
\models K$ if and only if $\emme$ satisfies the following, for
every $x\in \WW$:
\begin{enumerate}
    \item if $\rr_\emme(x) = 0$ then $S_x = \emptyset$
    \item if  $S_x \not= \emptyset$, then $\rr_\emme(x) > \rr_\emme(C)$ for every $C\ent D\in S_x$.
\end{enumerate}
\end{proposition}
\begin{proof}
(\emph{Only if part}) We prove condition 2.  Let $C\ent D\in S_x$,
suppose, we have $\emme, x\models C \land \lnot D$, since $\emme
\models C\ent D$ we obtain that $x\not\in Min_{<}(C)$, which
entails that $\rr_\emme(x) > \rr_\emme(C)$. Condition $1$ is a consequence of condition $2$,
since by $2$ if  $S_x \not= \emptyset$ then trivially
$\rr_\emme(x) > 0$.

(\emph{If part}) Let $A\ent B\in K$, suppose that $\emme$ satisfies
the two conditions above, we show that $\emme \models A\ent B$.
Let $x\in Min_{<}(A)$, if $\rr_\emme(x) = 0$, then $S_x =
\emptyset$, thus we get that $\emme, x\models A\ri B$, whence $\emme, x\models
B$.  Suppose now that $\rr_\emme(x) > 0$, if $\emme, x\models A
\land\lnot B$, then $A\ent B \in S_x$, but then by hypothesis we
get $\rr_\emme(x) > \rr_\emme(A)$ against the fact that $x\in
Min_{<}(A)$.
\hfill $\blacksquare$
\end{proof}

\vspace{0.3cm}
\hide{
\noindent Observe that condition $1$ is a consequence of condition $2$,
since by $2$ if  $S_x \not= \emptyset$ then trivially
$\rr_\emme(x) > 0$;  we have explicitly mentioned it for clarity (see the subsequent  proposition and theorem).
}
\noindent In the proof of Proposition \ref{proposition nicola}, we have observed that condition $1$ is a consequence of condition $2$;  we have explicitly mentioned it for clarity (see the subsequent  proposition and theorem).

\begin{proposition}\label{lemma nicola}
Let $K$ be a knowledge base and let $\emme$ be a \emph{minimal} model  of $K$
with respect to $\mathit{FIMS}$;  then $\emme$
satisfies for every $x\in \WW$:
\begin{enumerate}
\begin{footnotesize}
    \item if $S_x = \emptyset$ then $\rr_\emme(x) = 0$.
    \item if  $S_x \not= \emptyset$, then $\rr_\emme(x) = 1+ max \{\rr_\emme(C) \mid C\ent D\in S_x\}$.
\end{footnotesize}
\end{enumerate}
\end{proposition}

\begin{proof}
Let $\emme = \sx \WW, <, V \dx$. Suppose that $S_x = \emptyset$, but $\rr_\emme(x)
> 0$, define a model $\emme' = \sx \WW, <', V \dx$ by letting $\rr_{\emme'}(x) =
0$ and $\rr_{\emme'}(y) = \rr_{\emme}(y)$ for $y\not=x$. We show
that $\emme'\models K$, obtaining a contradiction with the
hypothesis that $\emme$ is minimal. Let $A\ent B\in K$, suppose
that $w\in Min_{<}^{\emme'}(A)$. If $w=x$, since $S_x=\emptyset$,
we have that $\emme', x\models B$  (the evaluation function of
$\emme'$ is the same as the one in $\emme$). If $w\not=x$ and
$w\in Min_{<}^{\emme'}(A)$ we must have that $w\in
Min_{<}^{\emme}(A)$, otherwise there would be a world $y$ with
$\emme, y\models A$ and with $\rr_{\emme'}(y) \leq \rr_\emme(y) <
\rr_\emme(w) = \rr_{\emme'}(w)$, against the fact that $w\in
Min_{<}^{\emme'}(A)$; we then conclude by the fact that $\emme
\models A\ent B$ so that $\emme, w\models B$, whence $\emme', w\models B$.

Suppose now that  $S_x \not= \emptyset$, but $\rr_\emme(x) \not= 1+
max \{\rr_\emme(C) \mid C\ent D\in S_x\}$. By Proposition \ref{proposition nicola}, it
must be
$\rr_\emme(x) > 1+ max \{\rr_\emme(C) \mid C\ent D\in S_x\}.$
In this case, we define a model $\emme' = (\WW, <', V)$, by
stipulating
 $\rr_\emme(x) = 1+ max \{\rr_\emme(C) \mid C\ent D\in
S_x\}$ and $\rr_{\emme'}(y) = \rr_{\emme}(y)$ for $y\not=x$. We
show that $\emme'\models K$, obtaining a contradiction with the
hypothesis that $\emme$ is minimal. Let $A\ent B\in K$ and let
$w\in Min_{<}^{\emme'}(A)$. If $w\not=x$  we get as before that
$w\in Min_{<}^{\emme}(A)$ and we conclude by the  fact that $\emme
\models A\ent B$.  Let  now $w=x$, if $A\ent B\not\in S_x$, we are
done as $\emme',x\models A\ri B$. If $A\ent B\in S_x$, then it must be $x\not\in
Min_{<}^{\emme}(A)$, thus there is $y$ s.t. $\emme, y\models A$, with
$\rr_\emme(y) = \rr_\emme(A)$  and  $\rr_\emme(y) < \rr_\emme(x)$.
Since $\rr_{\emme'}(y) = \rr_\emme(y)$ (and $\rr_{\emme'}(A) =
\rr_\emme(A)$) we get that $\rr_{\emme'}(x) > \rr_{\emme'}(A)$,
against the hypothesis that $x\in Min_{<}^{\emme'}(A)$.
\hfill $\blacksquare$
\end{proof}

\vspace{0.3cm}

\begin{theorem}
Let $K$ be a knowledge base and let $\emme$ be any model, then
$\emme$ is a $\mathit{FIMS}$ minimal model of $K$ if and only if
$\emme$ satisfies for every $x\in \Delta$:
\begin{enumerate}
 \begin{footnotesize}
   \item $S_x = \emptyset$ iff $\rr_\emme(x) = 0$.
    \item if  $S_x \not= \emptyset$, then $\rr_\emme(x) = 1+ max \{\rr_\emme(C) \mid C\ent D\in S_x\}$.
\end{footnotesize}
\end{enumerate}
\end{theorem}
\begin{proof}
The \emph{only if} direction immediately follows from Proposition \ref{lemma nicola}. For the \emph{if} direction, let $\emme = \langle \WW,<, V \rangle$ be a
model with associated $\rr_\emme$, if $\emme$ satisfies the two
conditions by Proposition \ref{proposition nicola} it follows that
$\emme \models K$. Let $\emme'\models K$ with $\emme' = \sx
\WW,<', V \dx$, and associated $\rr_{\emme'}$, then $\emme'$
satisfies the conditions of Proposition \ref{proposition nicola}.
By induction on $\rr_{\emme'}(x)$ we show that $\rr_\emme(x) \leq
\rr_{\emme'}(x)$. If $\rr_{\emme'}(x) = 0$ then $S_x=\emptyset$ so
that by Lemma \ref{lemma nicola} $\rr_\emme(x)= 0$. Let
$\rr_{\emme'}(x) > 0$: if $S_x=\emptyset$ then $\rr_\emme(x) = 0 <
\rr'(x)$. If $S_x\not=\emptyset$, then:
(i) $ \rr_{\emme'}(x) >  \rr_{\emme'}(C)$ for every $C\ent D\in S_x$ and
(ii) $\rr_\emme(x) = 1+ max \{\rr_\emme(C) \mid C\ent D\in S_x\}$.
By (i) and induction hypothesis it follows $\rr_\emme(C) \leq
\rr_{\emme'}(C)$, thus:  $\rr(x) = 1+ max \{\rr_\emme(C) \mid
C\ent D\in S_x\} \leq 1+  max \{ \rr_{\emme'}(C) \mid C\ent D\in
S_x\} \leq \rr_{\emme'}(x)$. We have shown that for all $x \in
\WW$, $\rr_\emme(x) \leq \rr_{\emme'}(x)$, hence $\emme' \not
<_{FIMS} \emme$, and $\emme$ is minimal.
\hfill $\blacksquare$
\end{proof}
}


\section{A Semantical Reconstruction of Rational Closure in DLs}\label{sez:rc}
\vspace{-0.1cm}
In this section we provide a definition of the well known rational
closure, described in \cite{whatdoes}, in the context of Description Logics.
We then provide a semantic characterization of it within the  semantics
described in the previous section.  

\hide{
Before we provide the definition of rational closure for DLs, we recall  the notion of
rational closure for the propositional case, in order to show that our definition is the natural extension of the one
provided in \cite{whatdoes}.
 We proceed by giving its syntactical definition in terms of
\emph{rank} of a formula. 

[[LA METTIAMO rational closure PER IL CASO PROPOSIZIONALE? SE SI', DOBBIAMO RICHIAMARE IL LINGUAGGIO]]
\begin{definition}
Let $K$ be a knowledge base (i.e. a finite set of positive
conditional assertions) and $A$ a propositional formula. $A$ is
said to be {\em exceptional} for $K$ iff $K \models_R \top \ent
\neg A$\footnote{In \cite{whatdoes}, $\models_P$ is used instead
of $\models_R$. However when $K$ contains only positive conditionals
the two notions coincide (see footnote 1) and we prefer to use
$\models_R$ here since we consider rational models.}.
\end{definition}

\noindent  A conditional formula $A \ent B$ is exceptional for $K$ if its antecedent $A$ is exceptional for $K$. The set of conditional formulas which are exceptional for $K$ will be denoted
as $E(K)$. It is possible to define a non-decreasing sequence of subsets of
$K$ $C_0 \supseteq C_1, \dots$ by letting $C_0 = K$ and, for
$i>0$, $C_i=E(C_{i-1})$. Observe that, being $K$ finite, there is
a $n\geq 0$ such that for all $m> n, C_m = C_n$ or $C_m =
\emptyset$.

\begin{definition}\label{Def:Rank of a
formula} A propositional formula $A$ has {\em rank} $i$ for $K$
iff $i$ is the least natural number for which $A$ is
not exceptional for $C_{i}$. {If $A$ is exceptional for all
$C_{i}$ then $A$ has no rank.}
\end{definition}

 \noindent The notion of rank of a formula allows to define
the rational closure of a knowledge base $K$.

\begin{definition}\label{def:rational closure} Let $K$ be a conditional knowledge base. The
rational closure $\overline{K}$ of $K$ is the set of all $A \ent B$ such that
either (1) the rank of $A$ is strictly less than the rank of
$A \land \neg B$ (this includes the case $A$ has a rank and $A
\land \neg B$ has none), or
(2) $A$ has no rank.
\end{definition}

\noindent The rational closure of a knowledge base $K$ seemingly contains all
conditional assertions that, in the analysis of nonmonotonic
reasoning provided in \cite{whatdoes}, one rationally wants to
derive from $K$. For a full discussion, see \cite{whatdoes}.
}

\begin{definition}
Let $K$ be a DL knowledge base and $C$ a concept. $C$ is
said to be {\em exceptional} for $K$ iff $K \modelsalctr \tip(\top) \sqsubseteq
\neg C$.
\end{definition}

\noindent Let us now extend Lehmann and Magidor's definition of rational closure to a DL knowledge base. First, we remember that the \tip \ operator satisfies
a set of postulates that are essentially a reformulation of KLM axioms
of rational logic \Ra: in this respect, in \cite{FI09} it is shown that the \tip-assertion $\tip(C) \sqsubseteq D$ is equivalent to the conditional assertion $C \ent D$ of KLM logic \Ra.
We say that a \tip-inclusion $\tip(C) \sqsubseteq D$ is exceptional for $K$ if $C$ is exceptional for $K$. The set of \tip-inclusions  which are exceptional for $K$ will be denoted
as $\mathcal{E}(K)$. Also in this case, it is possible to define a sequence of non-increasing subsets of
$K$ $E_0 \supseteq E_1, \dots$ by letting $E_0 = K$ and, for
$i>0$, $E_i=\mathcal{E}(E_{i-1}) \unione \{ C \sqsubseteq D \in K$ s.t. $\tip$ does not occurr in $C\}$. Observe that, being $K$ finite, there is
a $n\geq 0$ such that for all $m> n, E_m = E_n$ or $E_m =
\emptyset$.

\begin{definition}\label{Def:Rank of a
formula} A concept $C$ has {\em rank} $i$ (denoted by $\rf(C)=i$) for $K$
iff $i$ is the least natural number for which $C$ is
not exceptional for $E_{i}$. {If $C$ is exceptional for all
$E_{i}$ then $\rf(C)=\infty$, and we say that $C$ has no rank.}
\end{definition}

 \noindent The notion of rank of a formula allows to define
the rational closure of the TBox of a knowledge base $K$.

\begin{definition}\label{def:rational closureDL}[Rational closure of TBox] Let $K$=(TBox,ABox) be DL knowledge base. We define the
rational closure $\overline{\mathit{TBox}}$ of TBox of $K$ where

\begin{center}
    $\mbox{$\overline{\mathit{TBox}}$}=\{\tip(C) \sqsubseteq D \tc \mbox{either} \ \rf(C) < \rf(C \sqcap \nott D)$ \\ $\mbox{or} \ \rf(C)=\infty\} \ \unione \
    \{C \sqsubseteq D \tc K \models_{\alc} C \sqsubseteq D\}$
\end{center}

\end{definition}

\noindent  It is worth noticing that Definition \ref{def:rational closureDL} takes into account the monotonic logical consequences $C \sqsubseteq D$ with respect to $\alc$. This is due to the fact that the language here is richer than that considered by Lehmann and Magidor, who only considers the set of conditionals $C \ent D$ that, as said above, correspond to \tip-inclusions $\tip(C) \sqsubseteq D$. The above Definition \ref{def:rational closureDL} also takes into account classical inclusions $C \sqsubseteq D$ that belong to our language.

In the following we show that the minimal model semantics defined in the previous section can be used to provide a semantical characterization of rational closure.

First of all, we can observe that $\mathit{FIMS}$ as it is cannot capture the rational closure of a TBox.
 For instance, consider the knowledge base $K=$(TBox,$\vuoto$) of the penguin example, where TBox contains the following inclusions: 
   $\mathit{Penguin} \sqsubseteq \mathit{Bird}$,
  $\tip(\mathit{Bird}) \sqsubseteq \mathit{Fly}$,
  $\tip(\mathit{Penguin}) \sqsubseteq \nott \mathit{Fly}$.
  We observe that $K
\not\models_{\mathit{FIMS}} \tip(\mathit{Penguin} \sqcap \mathit{Black}) \sqsubseteq \neg \mathit{Fly}$.
 Indeed in $\mathit{FIMS}$ there can be a model $\emme=\sx\Delta, <, I\dx$ in which
$\Delta = \{x,y,z\}$, $\mathit{Penguin}^I=\{x, y\}$, $\mathit{Bird}^I=\{x, y, z\}$,
$\mathit{Fly}^I=\{x, z\}$, $\mathit{Black}^I=\{x\}$, and $z < y < x$.
$\emme$ is a model of $K$, and  it is minimal with respect to
$\mathit{FIMS}$ (indeed it
is not possible to lower the rank of $x$ nor of $y$ nor of $z$
unless we falsify $K$). Furthermore,  $x$ is a typical
black penguin in $\emme$ (since there is no other black penguin preferred to
it) that flies. 
On the contrary, it can be  verified that
$\tip(\mathit{Penguin} \sqcap \mathit{Black}) \sqsubseteq \neg \mathit{Fly} \in \overline{\mathit{TBox}}$.
 Things change if we
consider $\mathit{FIMS}$ applied to models that contain a distinct domain element for {\em each
combination of concepts consistent with $K$}.
We call these models {\em canonical models}.
In the example, 
if we restrict our attention to models $\emme=\sx\Delta, <, I\dx$ that also contain a $w \in \Delta$ which is a black penguin
that does not fly, that is to say 
$w \in \mathit{Penguin}^I$, $w \in \mathit{Bird}^I$,  $w \in \mathit{Black}^I$, and $w \not\in \mathit{Fly}^I$ and can therefore be assumed to be a typical
penguin, we are able to conclude that typically black penguins do
not fly, as in rational closure. Indeed, in all minimal models of
$K$ that also contain $w$ with $w \in \mathit{Penguin}^I$, $w \in \mathit{Bird}^I$,  $w \in \mathit{Black}^I$, and $w \not\in \mathit{Fly}^I$, it holds that 
$\tip(\mathit{Penguin} \sqcap \mathit{Black}) \sqsubseteq \neg \mathit{Fly}$.

From now on, we restrict our attention to \emph{canonical minimal models}. 


 Given a knowledge base $K$ and a query $F$, we call $\lingconc$ the set of all concepts occurring (even as subconcepts) either in $K$ or in $F$, as well as of their complements. 
In order to define canonical minimal models, we consider the set of all consistent sets of concepts that are consistent with $K$. A set of concepts $\{C_1, C_2, \dots, C_n\} \subseteq \lingconc$ is consistent with $K$ if $K \not\models_{\mathcal{ALC}} C_1 \sqcap C_2 \sqcap \dots \sqcap C_n \sqsubseteq \bot$.

\begin{definition}[Canonical minimal model w.r.t. $K$ and $F$]
Given $K$ and a query $F$, a minimal model $\emme=\sx \Delta, <, I \dx$ satisfying $K$ is said to be canonical w.r.t. $K$ and $F$ if it contains at least a distinct domain element $x \in \Delta$ s.t. $x \in C^I$ for each combination $C$ in $\mathcal{S}$ consistent with $K$.
\end{definition}

\noindent We can prove the following results:

\begin{proposition}\label{proposition_rank}
Let $\emme$ be a minimal canonical model of $K$. For all concepts $C \in \lingconc$, it holds that $rank(C) = k_{\emme}(C)$.
\end{proposition}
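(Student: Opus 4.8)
The plan is to establish the identity by proving the two inequalities $\rf(C) \le k_\emme(C)$ and $k_\emme(C) \le \rf(C)$ separately. The first (``soundness'') will in fact hold in \emph{every} model of $K$ and will not use canonicity; the second (``completeness'') is where minimality and canonicity of $\emme$ enter. Throughout I identify a concept with the combination of $\lingconc$-concepts it determines, and use that $\lingconc$ is closed under complement.

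For soundness I would prove, by induction on $i$, the statement: for every model $\enne \models K$ and every concept $C$, if $C$ is exceptional for $E_i$ then $k_\enne(C) \ge i+1$. The base case $i=0$ is immediate from the definition of exceptionality, since $K \modelsalctr \tip(\top) \sqsubseteq \neg C$ says exactly that no rank-$0$ (minimal) element of any model of $K$ lies in $C^I$. For the inductive step I argue the contrapositive: given $x \in C^I$ with $k_\enne(x) \le i$, I build from $\enne$ a model of $E_i$ in which $x$ is minimal, witnessing that $C$ is not exceptional for $E_i$. Crucially I do \emph{not} restrict the domain of $\enne$ (which could destroy the interpretation of the $\forall R.C_R$ and $\exists R.C_R$ concepts), but keep $\Delta$ and $I$ fixed and only compress the ranks, replacing $k_\enne$ by $k'(y)=\max(0,\,k_\enne(y)-i)$. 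The classical inclusions are preserved because $I$ is unchanged; and for each $\tip$-inclusion $\tip(C')\sqsubseteq D'$ in $E_i$ the antecedent $C'$ is exceptional for $E_{i-1}$, so by the induction hypothesis all its instances have rank $\ge i$ in $\enne$, whence the compression leaves $Min_<(C'^I)$ unchanged and the inclusion still holds. Since $x$ now has rank $0$, $C$ is not exceptional for $E_i$. This yields $\rf(C) \le k_\emme(C)$, and in the case $\rf(C)=\infty$ it forces $C^I=\vuoto$, i.e.\ $k_\emme(C)=\infty$.

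For completeness I would prove $k_\emme(C) \le \rf(C)$ by induction on $j=\rf(C)$, now using that $\emme$ is a minimal canonical model. If $\rf(C)=j$ then $C$ is not exceptional for $E_j$, so there is a model $\enne_0 \models E_j$ with a minimal element $x$ satisfying $C$. Let $m \subseteq \lingconc$ be the combination realised by $x$; it is consistent with $K$ (consistency w.r.t.\ $\alc$ depends only on the classical inclusions, which $E_j$ and $K$ share), so by canonicity $\emme$ contains a representative $x_m$ of $m$ with $x_m \in C^I$. It remains to show $k_\emme(x_m) \le j$. Here I invoke the characterisation of $\mathit{FIMS}$-minimal models: $k_\emme(x_m)$ is $0$ if $x_m$ violates no $\tip$-inclusion of $K$, and otherwise equals $1+\max\{k_\emme(C') \mid \tip(C')\sqsubseteq D'\in K,\ x_m\in C'^I\setminus D'^I\}$. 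For any such violated inclusion, $x$ realises the same combination $m$ and hence also satisfies $C'\sqcap\neg D'$; since $x$ is minimal in $\enne_0\models E_j$, the inclusion $\tip(C')\sqsubseteq D'$ cannot belong to $E_j$, so $C'$ is not exceptional for $E_{j-1}$ and $\rf(C')\le j-1$. By the induction hypothesis $k_\emme(C')\le \rf(C')\le j-1$, giving $k_\emme(x_m)\le 1+(j-1)=j$, hence $k_\emme(C)\le j$.

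Combining the two inequalities yields $\rf(C)=k_\emme(C)$ for every $C\in\lingconc$. I expect the main obstacle to be the inductive step of the soundness part: one must turn an arbitrary model of $K$ into a model of $E_i$ without losing the rank information about $x$, and since the naive domain-restriction breaks the semantics of quantified concepts, the rank-compression construction — together with the observation that the minimal instances of an $E_{i-1}$-exceptional antecedent are never pushed below the compression threshold — is the delicate point. The completeness direction is conceptually lighter once the minimal-model rank characterisation is available, its only subtlety being the transfer of the combination $m$ from $\enne_0$ to its canonical representative in $\emme$.
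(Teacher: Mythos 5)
Your proof is correct and follows the very route the paper announces: the paper omits all details, saying only that ``the proof can be done by induction on the rank of the concept $C$'', and your two-directional argument --- soundness ($\rf(C)\leq k_{\emme}(C)$, valid in every model of $K$) by induction on the exceptionality level $i$ with the rank-compression $k'(y)=\max(0,k_{\enne}(y)-i)$, and completeness ($k_{\emme}(C)\leq \rf(C)$, using minimality and canonicity) by induction on $\rf(C)$ --- is a sound realization of exactly that plan, including the delicate points (the compression preserves $E_i$ because $E_i\subseteq K$, classical inclusions depend only on $I$, and instances of antecedents in $\mathcal{E}(E_{i-1})$ sit at rank $\geq i$; the transfer of the combination $m$ to the canonical model is legitimate because the paper defines ``consistent with $K$'' via $\models_{\alc}$, i.e.\ w.r.t.\ the classical inclusions, which $E_j$ and $K$ share). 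The one ingredient you import without proof, the rank characterisation of $\mathit{FIMS}$-minimal models, is indeed available: it is the DL transcription of the corresponding result in the propositional companion work the authors cite, and it carries over because $\mathit{FIMS}$ keeps $\Delta$ and $I$ fixed, so lowering a single element's rank leaves all $\tip$-free extensions untouched and can only shrink the sets $Min_<(C'^I)$, except possibly at that element itself, where the falsified/non-falsified case analysis closes the argument.
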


\noindent The proof can be done by induction on the rank of  concept $C$.

\begin{theorem}\label{Theorem_RC_TBox}
Given $K$, we have that $C \sqsubseteq D \in$ $\overline{\mathit{TBox}}$ if and only if $C \sqsubseteq D$ holds in all canonical minimal models with respect to $K$ and $C \sqsubseteq D$.
\end{theorem}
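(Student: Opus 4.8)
The plan is to prove both directions at once by using Proposition \ref{proposition_rank} to eliminate the quantification over models: since that proposition guarantees $\rf(C)=k_\emme(C)$ in \emph{every} canonical minimal model $\emme$, the rank of each concept — and hence the truth value of the query — is the same in all such models, so ``holds in all canonical minimal models'' reduces to a single, model-independent condition on $\rf$ that I can compare directly with Definition \ref{def:rational closureDL}. I would split the argument according to the two shapes an inclusion query can take: a \tip-inclusion $\tip(C)\sqsubseteq D$, or a classical inclusion $C\sqsubseteq D$ in which $\tip$ does not occur.

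For the \tip-inclusion case, fix any canonical minimal model $\emme$ of $K$. By Proposition \ref{proposition_rank}, $\rf(C)=k_\emme(C)$ and $\rf(C\sqcap\nott D)=k_\emme(C\sqcap\nott D)$; together with the purely semantic identity $k_\emme(C)=\min\{k_\emme(C\sqcap D),\,k_\emme(C\sqcap\nott D)\}$ (every element of $C^I$ lies in exactly one of $(C\sqcap D)^I$, $(C\sqcap\nott D)^I$), this shows that $\rf(C)<\rf(C\sqcap\nott D)$ is equivalent to $k_\emme(C\sqcap D)<k_\emme(C\sqcap\nott D)$, which by Proposition \ref{Truth conditions conditionals with height} is exactly the condition for $\emme$ to satisfy $\tip(C)\sqsubseteq D$. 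The remaining case $\rf(C)=\infty$ corresponds to $k_\emme(C)=\infty$, i.e. $C^I=\vuoto$, so $\tip(C)^I=\vuoto$ and the inclusion holds vacuously. Hence $\emme\models\tip(C)\sqsubseteq D$ if and only if $\rf(C)<\rf(C\sqcap\nott D)$ or $\rf(C)=\infty$, i.e. if and only if $\tip(C)\sqsubseteq D\in\overline{\mathit{TBox}}$. As this equivalence is phrased entirely in terms of $\rf$, it does not depend on the chosen $\emme$, so the query holds in \emph{all} canonical minimal models precisely when it belongs to $\overline{\mathit{TBox}}$.

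For the classical case, by Definition \ref{def:rational closureDL} we have $C\sqsubseteq D\in\overline{\mathit{TBox}}$ iff $K\models_{\alc}C\sqsubseteq D$. The forward direction is immediate: a monotonic $\alc$-consequence of $K$ holds in every model of $K$, in particular in every canonical minimal model. For the converse I argue contrapositively: if $K\not\models_{\alc}C\sqsubseteq D$ then $\{C,\nott D\}$ is consistent with $K$, hence it is one of the combinations witnessed in any canonical minimal model; the corresponding element $x$ satisfies $x\in C^I$ and $x\notin D^I$, so $C\sqsubseteq D$ fails in that model and therefore does not hold in all canonical minimal models.

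The main obstacle, and the point requiring care, is the reliance on Proposition \ref{proposition_rank} at the Boolean combination $C\sqcap\nott D$ rather than at a concept literally listed in $\lingconc$; this is legitimate because the canonical model provides a witness for \emph{every} combination of concepts of $\lingconc$ consistent with $K$, so the inductive argument establishing Proposition \ref{proposition_rank} applies verbatim to such conjunctions. A second point to secure is that at least one canonical minimal model of $K$ exists, so that the universal statements are not vacuously true and the witnesses used in the converse directions are actually available; granting this, the equivalences above combine to give the theorem.
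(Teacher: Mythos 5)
Your proposal is correct and takes essentially the same route as the paper: the paper's own proof is just the remark that the theorem ``directly follows from Proposition \ref{proposition_rank}'', and your argument --- using Proposition \ref{proposition_rank} to make the truth of the query model-independent, combining it with the identity $k_\emme(C)=\min\{k_\emme(C\sqcap D),k_\emme(C\sqcap\nott D)\}$ and Proposition \ref{Truth conditions conditionals with height} for the $\tip$-inclusion case, and using the canonical-witness element for the classical case --- is exactly a sound filling-in of that omitted proof. The two caveats you flag (applying Proposition \ref{proposition_rank} to the conjunction $C\sqcap\nott D$, which is not literally in $\lingconc$, and the existence of canonical minimal models) are genuine gaps left open by the paper as well, and your handling of them is reasonable.
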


\noindent This thoerem directly follows from Proposition \ref{proposition_rank}. Due to space limitations we omit the proofs.

\section{Rational Closure Over the ABox}
\vspace{-0.1cm}
In this section we extend the notion of rational closure defined in the
previous section in order to take into account the individual constants in the
ABox. We therefore address the question: what does the rational closure of a
knowledge base $K$ allow us to infer about a specific individual constant $a$
occurring in the ABox of $K$?
We propose the algorithm below to answer this question and we show that it corresponds to what is entailed by the minimal model semantics presented in the previous section. The idea of the
algorithm is that of considering all the possible minimal consistent
assignments of ranks to the individuals explicitly named in the ABox. Each
assignment adds some properties to named individuals which can be used to infer
new conclusions. We adopt a skeptical view of considering only those
conclusions which hold for all assignments. The equivalence with the semantics
shows that the minimal entailment captures a skeptical approach when reasoning
about the ABox.

\begin{definition}[Rational closure of ABox]\label{Rational Closure ABox}
$\bullet$ Let $a_1, \dots, a_m$ be the individuals explicitly named in the ABox. Let
$k_1, k_2, \dots, k_h$ all the possible rank assignments (ranging from $1$
to $n$) to the individuals occurring in ABox.

\noindent $\bullet$ We find the consistent $k_j$ with ($\overline{\mathit{TBox}}$,
ABox), where:

\noindent - for all $a_i$ in ABox, we define $\mu^j_{i} = \{ (\neg C \sqcup D)(a_i)$
s.t. $C, D \in \lingconc$, $\tip(C) \sqsubseteq D$ in
$\overline{\mathit{TBox}}$, and  $k_j(a_i) \leq rank(C)  \} \cup \{ ( \neg C
\sqcup D)(a_i) $ s.t. $C \sqsubseteq D$ in TBox $\}$;

\noindent - let $\mu^j = \mu^j_{1} \cup \dots \cup \mu^j_{n} $ for all $\mu^j_{1}
\dots \mu^j_{n} $ just calculated;

\noindent - $k_j$ is \emph{consistent} with ($\overline{\mathit{TBox}}$, ABox) if 
ABox $\cup \mu^j$ is consistent in $\alc$.

\noindent $\bullet$ We consider the \emph{minimal consistent} $k_j$ i.e. those for which
there is no $k_i$ consistent wih ($\overline{\mathit{TBox}}$, ABox) s.t. for all $a_i$, $k_i(a_i) \leq k_j(a_i)$ and for a $b$,
$k_i(b) < k_j(b)$.

\noindent $\bullet$ We define the rational closure of ABox, denoted as
$\overline{\mathit{ABox}}$, the set of all assertions derivable in $\alc$ from
ABox $\cup \mu^j$ for all minimal consistent rank assignments $k_j$, i.e:
\begin{center}
$\overline{\mathit{ABox}} = \bigcap_{k_j}\{C(a): \;$ ABox $\cup \mu^j
\models_{\alc} C(a) \}$
\end{center}
\end{definition}

\begin{theorem}[Soundness of $\overline{\mathit{ABox}}$
]\label{Theorem_Soundness_ABox}
Given $K$=(TBox, ABox), for all $a$ individual constant in ABox, we have that
if $C(a) \in$ $\overline{\mathit{ABox}}$ then $C(a)$ holds in all minimal
canonical models of $K$.
\end{theorem}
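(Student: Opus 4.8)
The plan is to connect the fixed minimal canonical model to the single rank assignment it induces on the named individuals, and then read off the conclusion from the definition of $\overline{\mathit{ABox}}$. So, fix a minimal canonical model $\emme = \sx \Delta, <, I \dx$ of $K$, assume $C(a) \in \overline{\mathit{ABox}}$, and let $a_1,\dots,a_m$ be the individuals named in the ABox. I would define the rank assignment $\hat{k}$ by $\hat{k}(a_\ell) = k_\emme(a_\ell^I)$; since the rank of any element in a minimal model does not exceed $n$, $\hat{k}$ is one of the candidate assignments $k_1,\dots,k_h$ of Definition \ref{Rational Closure ABox}. By that definition, $C(a) \in \overline{\mathit{ABox}}$ means $\mathit{ABox} \cup \mu^j \models_{\alc} C(a)$ for \emph{every} minimal consistent $k_j$; hence it suffices to show (i) that $\hat{k}$ is minimal consistent, and (ii) that $\emme$, read as a plain $\alc$-interpretation (forgetting $<$), satisfies $\mathit{ABox} \cup \mu^{\hat{k}}$. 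Indeed, (i) together with $C(a)\in\overline{\mathit{ABox}}$ gives $\mathit{ABox}\cup\mu^{\hat{k}}\models_{\alc}C(a)$, and then (ii) yields $a^I \in C^I$.

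For (ii), recall $\mu^{\hat{k}}=\bigcup_\ell \mu^{\hat{k}}_\ell$. Since $\emme \models K$ it satisfies the ABox and every strict inclusion $C \sqsubseteq D \in \mathit{TBox}$ globally, so $a_\ell^I \in (\neg C \sqcup D)^I$, which disposes of the classical part of each $\mu^{\hat{k}}_\ell$. For the defeasible part, take $\tip(C)\sqsubseteq D \in \overline{\mathit{TBox}}$ with $\hat{k}(a_\ell) \le \rf(C)$; I may assume $a_\ell^I \in C^I$, as otherwise $a_\ell^I \in (\neg C \sqcup D)^I$ trivially. By Proposition \ref{proposition_rank}, $\rf(C)=k_\emme(C)$, so $k_\emme(a_\ell^I)=\hat{k}(a_\ell)\le k_\emme(C)$; but $a_\ell^I \in C^I$ forces $k_\emme(a_\ell^I)\ge k_\emme(C)$, whence $k_\emme(a_\ell^I)=k_\emme(C)$. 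By modularity ($x<y$ iff $k_\emme(x)<k_\emme(y)$) this says $a_\ell^I$ has least rank in $C^I$, i.e. $a_\ell^I \in Min_<(C^I)=\tip(C)^I$. Finally, $\tip(C)\sqsubseteq D \in \overline{\mathit{TBox}}$ gives $\rf(C)<\rf(C\mint\neg D)$ or $\rf(C)=\infty$, so by Propositions \ref{proposition_rank} and \ref{Truth conditions conditionals with height} $\emme$ satisfies $\tip(C)\sqsubseteq D$ and $\tip(C)^I \subseteq D^I$; hence $a_\ell^I \in D^I$. This proves (ii), and in particular $\mathit{ABox}\cup\mu^{\hat{k}}$ is $\alc$-satisfiable, so $\hat{k}$ is \emph{consistent}.

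For (i) I would argue by contradiction: if $\hat{k}$ were not minimal consistent there would be a consistent $k_i$ with $k_i(a_\ell)\le \hat{k}(a_\ell)$ for all $\ell$ and $k_i(b)<\hat{k}(b)$ for some $b$. The crux is a \emph{realisation} step: from the $\alc$-consistency of $\mathit{ABox}\cup\mu^i$ one constructs a canonical model $\emme'$ of $K$ that minimises the rank of every domain element and in which each named $a_\ell$ satisfies, by construction, every $\tip$-inclusion of $\overline{\mathit{TBox}}$ whose antecedent has rank $\ge k_i(a_\ell)$. Then $a_\ell$ can be non-typical only for concepts of rank $<k_i(a_\ell)$, so $k_{\emme'}(a_\ell^{I'})\le k_i(a_\ell)$. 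Consequently $k_{\emme'}(a_\ell^{I'})\le k_i(a_\ell)\le k_\emme(a_\ell^I)$ for all $\ell$ and $k_{\emme'}(b^{I'})\le k_i(b) < \hat{k}(b) = k_\emme(b^I)$, i.e. $\emme' <_{\mathit{ABox}} \emme$ in the sense of Definition \ref{model-minimally-satisfying-k}, contradicting the assumption that $\emme$ minimally satisfies $K$. This establishes (i) and, with (ii), the theorem.

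The hard part will be the realisation step invoked for (i): turning a bare $\alc$-model witnessing the consistency of $\mathit{ABox}\cup\mu^i$ into a \emph{canonical}, rank-minimal $\alctr$ model of the whole $K$ whose named individuals attain ranks bounded by $k_i$. Concretely this means closing the interpretation under the TBox, adjoining a distinct witness for every combination in $\lingconc$ that is consistent with $K$, and assigning to each element the least rank compatible with the $\tip$-inclusions it falsifies, and then verifying that the outcome really is a minimal model of $K$. I expect this construction, rather than the verification in (ii), to be the main obstacle.
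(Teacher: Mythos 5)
Your proposal is correct and follows essentially the same route as the paper's own proof: take the rank assignment $\hat k$ induced by the given minimal canonical model $\emme$, verify (just as the paper does, via Proposition \ref{proposition_rank} and Theorem \ref{Theorem_RC_TBox}) that $\emme$, read as an $\alc$-interpretation, satisfies $\mathit{ABox} \cup \mu^{\hat k}$, and then conclude from the definition of $\overline{\mathit{ABox}}$ that $C(a)$ holds in $\emme$. The only divergence is your point (i): the paper dismisses the minimality of the induced assignment with the single word ``obviously,'' whereas you rightly identify it as a nontrivial claim whose proof needs a realisation step --- building, from a consistent assignment preferred to $\hat k$, a canonical rank-minimal model of $K$ that beats $\emme$ in the $<_{\mathit{ABox}}$ order, which is essentially the model construction the paper carries out in the completeness proof (Theorem \ref{Theorem_Completeness_ABox}) --- and your sketch of that construction is sound.
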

\begin{proof}
[Fact 0] For any minimal canonical model $\emme$ of $K$= (TBox, ABox) there is
a minimal rank assignment $k_j$ consistent with respect to
($\overline{\mathit{TBox}}$, ABox), such that for all $a$ in ABox and all $C$:
if  ABox $\cup \mu^j \models_{\alc} C(a)$ then $C(a)$ holds in $\emme$. This
can be proven as follows. Let $\emme$ be a minimal canonical model of $K$. Let
$k_j$ be the rank assignment corresponding to $\emme$: s.t. for all $a_i$ in
ABox $k_j(a_i)= k_{\emme}({a_i}^I)$. Obviously $k_j$ is minimal. 
Furthermore, $\emme \models $ ABox $\cup \mu^j $ . Indeed, $\emme \models $
ABox by hypothesis. To show that $\emme \models \mu^j$ we reason as follows:
for all $a_i$ let $(\neg C \sqcup D)(a_i) \in \mu^j_{i}$. If ${a_i}^I \in (\neg
C)^I$ clearly $(\neg C \sqcup D)(a_i) $ holds in $\emme$. 
On the other hand, if ${a_i}^I \in (C)^I$: by hypothesis $rank(C) \geq
k_j(a_i)$ hence
 by the correspondence between rank of a formula in the rational closure and
  in minimal canonical models (see Proposition \ref{proposition_rank}) 
also $k_{\emme}(C) \geq k_{\emme}({a_i}^I)$, but since ${a_i}^I \in (C)^I$,
$k_{\emme}(C) = k_{\emme}({a_i}^I)$, 
therefore ${a_i}^I \in (\tip(C))^I$. 
By definition of $ \mu_{i}$, and since by Theorem \ref{Theorem_RC_TBox}, 
$\emme \models \overline{\mathit{TBox}}$,  $D(a_i)$ holds in $\emme$ and
therefore also ${a_i}^I \in (\neg C \sqcup D)^I $. Hence, if  ABox $\cup \mu^j
\models_{\alc} C(a_i)$ then $C(a_i)$ holds in $\emme$.

Let $C(a) \in $
$\overline{\mathit{ABox}}$, and suppose for a contradiction that there is a
minimal canonical model $\emme$ of $K$ s.t. $C(a)$ does not hold in
$\emme$. By Fact 0 there must be a $k_j$ s.t.  ABox $\cup \mu^j
\not\models_{\alc} C(a)$, but this contradicts the fact that $C(a) \in $
$\overline{\mathit{ABox}}$. Therefore $C(a)$ must hold in all minimal canonical
models of $K$.
\hfill $\blacksquare$
\end{proof}

\begin{theorem}[Completeness of $\overline{\mathit{ABox}}$]\label{Theorem_Completeness_ABox}
Given $K$=(TBox, ABox), for all $a$ individual constant in ABox, we have that
if $C(a)$ holds in all minimal canonical models of $K$ then
$C(a) \in$ $\overline{\mathit{ABox}}$.
\end{theorem}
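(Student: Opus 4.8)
The plan is to prove the contrapositive: assuming $C(a) \notin \overline{\mathit{ABox}}$, I will construct a minimal canonical model $\emme$ of $K$ in which $C(a)$ fails. By the definition of $\overline{\mathit{ABox}}$ as the intersection over all minimal consistent rank assignments $k_j$, the hypothesis $C(a) \notin \overline{\mathit{ABox}}$ means there exists some minimal consistent $k_j$ such that $\mbox{ABox} \cup \mu^j \not\models_{\alc} C(a)$. The goal is to turn this single witnessing assignment $k_j$ into an actual model that is both \emph{canonical} and \emph{minimal} with respect to $<_{\mathit{ABox}}$, and in which $a^I \notin C^I$.

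First I would exploit $\mbox{ABox} \cup \mu^j \not\models_{\alc} C(a)$ to obtain, by classical $\alc$ completeness, an $\alc$ model of $\mbox{ABox} \cup \mu^j \cup \{\neg C(a)\}$. The next step is to upgrade this plain $\alc$ model into a model of $\alctr$ by equipping it with a ranking function: I would add, for every combination of concepts in $\lingconc$ that is consistent with $K$, a distinct domain element realizing that combination (this makes the model canonical), and then assign ranks to \emph{all} domain elements according to the recipe forced by minimality, namely using the rational-closure rank of concepts via Proposition~\ref{proposition_rank} (so that $k_\emme(x)$ is determined by the $S_x$-style condition characterizing $\mathit{FIMS}$ minimal models). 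By Proposition~\ref{proposition_rank}, $\mathit{rank}(C) = k_\emme(C)$ for the constructed canonical model, which is exactly what ties the syntactic rank assignment to the semantic ranks.

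The delicate step is to verify that the named individuals $a_i$ receive precisely the ranks $k_j(a_i)$ dictated by the chosen assignment, and that this assignment is realizable while keeping $\neg C(a)$ true. Here I would use the consistency of $\mbox{ABox} \cup \mu^j$ in $\alc$: the sets $\mu^j_i$ were defined exactly so that placing $a_i$ at rank $k_j(a_i)$ forces $a_i$ to satisfy every $\tip(C)\sqsubseteq D$-inclusion whose antecedent has rank at least $k_j(a_i)$, which is what it means for $a_i^I$ to sit at that rank consistently with $\overline{\mathit{TBox}}$. One then checks that $\emme$ genuinely satisfies $K$ and the characterization of $\mathit{FIMS}$-minimal models, and that the ABox-rank profile of $\emme$ corresponds to $k_j$; minimality of $k_j$ among consistent assignments translates, via the correspondence, into minimality of $\emme$ with respect to $<_{\mathit{ABox}}$.

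The main obstacle I anticipate is this last correspondence between \emph{minimal consistent rank assignments} $k_j$ in the algorithm and \emph{models minimal with respect to $<_{\mathit{ABox}}$}: I must show both that every minimal consistent $k_j$ is actually realized by some minimal canonical model (so that the witnessing $k_j$ yields a genuine counter-model), and conversely that the rank profile of any candidate minimal model is itself a minimal consistent assignment, so that no spurious lowering of some $a_i$'s rank is possible without violating consistency with $\overline{\mathit{TBox}}$. This amounts to proving a tight match between the $\alc$-consistency of $\mbox{ABox}\cup\mu^j$ and the existence of a $K$-model placing the individuals at those ranks, which is essentially the converse direction of Fact~0 from the soundness proof; I expect to reuse that Fact and Proposition~\ref{proposition_rank} as the core machinery, with the bulk of the work being the careful realizability argument for the ranking.
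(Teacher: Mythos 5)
Your overall architecture coincides with the paper's: argue by contraposition, extract a minimal consistent assignment $k_j$ with ABox $\cup\, \mu^j \not\models_{\alc} C(a)$, take an $\alc$ model $\emme'$ of ABox $\cup\, \mu^j \cup \{\neg C(a)\}$, rank the canonical elements by their rational-closure rank (via Proposition~\ref{proposition_rank}), give each named individual $a_i$ the rank $k_j(a_i)$, and finally convert minimality of $k_j$ into $<_{\mathit{ABox}}$-minimality of the constructed model. All of these ingredients appear in the paper's proof.

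There is, however, a genuine gap in the step where you ``upgrade this plain $\alc$ model'' by adding canonical elements and assigning ranks to \emph{all} of its domain elements. The set $\mu^j$ constrains only the \emph{named} individuals: it consists of assertions $(\neg C \sqcup D)(a_i)$ for the $a_i$ occurring in the ABox, and the consistency check in Definition~\ref{Rational Closure ABox} is plain $\alc$ consistency of a set of assertions, with no global TBox. Consequently $\emme'$ may contain anonymous elements that violate the classical inclusions of TBox (for instance, with TBox containing $\mathit{Penguin} \sqsubseteq \mathit{Bird}$ and ABox $= \{\mathit{Bird}(\mathit{tweety})\}$, nothing prevents $\emme'$ from containing an anonymous penguin that is not a bird). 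No choice of ranks can repair a violated classical inclusion, so the structure obtained by keeping the whole domain of $\emme'$ need not satisfy $K$ at all, and your verification step would break down. The paper's construction avoids exactly this: its domain is $\Delta_0 \cup \Delta_1$, where $\Delta_0$ consists of the maximal $K$-consistent subsets of $\lingconc$ (the canonical part, which satisfies TBox by construction, as in Theorem~\ref{Theorem_RC_TBox}) and $\Delta_1$ contains \emph{only} the named individuals, whose atomic concept memberships are copied from $\emme'$; the anonymous part of $\emme'$ is discarded, and each role edge from a named individual to an anonymous successor $x$ of $\emme'$ is rerouted to the canonical element of $\Delta_0$ realizing the same concepts as $x$. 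This rerouting preserves the quantified concepts satisfied by the named individuals while guaranteeing that every unnamed element of the final model respects the TBox. Your remaining steps (TBox satisfaction for named individuals via $\mu^j$, failure of $C(a)$ inherited from $\emme'$, and the correspondence between minimal consistent assignments and $<_{\mathit{ABox}}$-minimal models) do line up with the paper once the construction is corrected in this way.
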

\begin{proof}
We show the contrapositive.
Suppose $C(a) \not\in $ $\overline{\mathit{ABox}}$, i.e. there is a minimal
$k_j$ consistent with ($\overline{\mathit{TBox}}$, ABox) s.t. ABox $\cup \mu^j
\not\models_{\alc} C(a)$.
We build a minimal canonical model $\emme = \langle \Delta, < I \rangle$ of $K$
such that $C(a_i)$ does not hold in $\emme$ as follows.
Let $\Delta = \Delta_0 \cup \Delta_1$ where $\Delta_0=\{\{C_1, \dots C_k \}
\subseteq \mathcal{S}: \{C_1, \dots C_k \}$ is maximal and consistent with $K
\} $ and $\Delta_1= \{ a_i:$ $a_i$ in ABox $\}$.
We define the rank $k_{\emme}$ of each domain element as follows: $k_{\emme}(
\{C_1, \dots C_k \}) = rank(C_1 \sqcap \dots \sqcap C_k) $, and $k_{\emme}(a_i)
= k_j(a_i)$. We then define $<$ in the obvious way: $x < y$ iff $k_{\emme}(x) <
k_{\emme}(y)$.

We then define $I$ as follows. First for all $a_i$ in ABox we let $a_i^I =
a_i$. For the interpretation of concepts we reason in two different ways for
$\Delta_0$ and $\Delta_1$. For $\Delta_0$, for all atomic concepts $C'$, we let
$\{C_1, \dots, C_k \} \in C'^I$ iff $C' \in \{C_1, \dots, C_k \} $. $I$ then
extends to boolean combinations of concepts in the usual way. It can be easily
shown that for any boolean combination of concepts $C'$, $\{C_1, \dots, C_k \}
\in C'^I$ iff $C' \in \{C_1, \dots, C_k \} $. For $\Delta_1$, we start by
considering a model $\emme' = \langle \Delta', <, I' \rangle$ such that
$\emme' \models $ ABox $\cup \mu^j$ and $\emme' \not\models C(a)$. This model
exists by hypothesis. For all atomic concepts $C'$, we let $a_i \in {C'}^I$ in
$\emme$ iff $(a_i)^{I'} \in {C'}^{I'}$ in $\emme'$. Of course for any boolean
combination of concepts $C'$, $(a_i) \in {C'}^{I}$ iff $(a_i)^{I'} \in
{C'}^{I'}$.

In order to conclude the model's construction, for each role $R$, we define
$R^I$ as follows. For $X,Y \in \Delta_0$, $(X,Y) \in R^I$ iff $\{C'$: $\forall
R.C' \in X \} \subseteq Y$. For $a_i, a_j \in \Delta_1$,
$(a_i, a_j) \in R^I$ iff
$((a_i)^{I'}, (a_j)^{I'}) \in R^{I'}$ in $\emme'$. For $a_i \in \Delta_1$, $X
\in \Delta_0$, $(a_i, X) \in R^I$ iff there is an $x \in \Delta'$ of $\emme'$
such that $(a_i^{I'}, x) \in R^{I'}$ in $\emme'$ and, for all concepts $C'$, we have $x
\in C'^{I'}$ iff $X \in C'^{I}$. $I$ is extended to quantified concepts in the
usual way. It can be shown that for all $X \in \Delta_0$ for all (possibly)
quantified $C'$, $X \in (C')^I$ iff $C' \in X$, and that for all $a_i $ in
$\Delta_1$, for all quantified $C'$, $a_i \in (C')^I$ iff $a_i \in (C')^{I'}$.

$\emme$ satisfies ABox: for $a_iRa_j$ in ABox this holds by construction. For
$C'(a_i)$, this holds since $(a_i)^{I'} \in (C')^{I'} $ in $\emme'$, hence
$(a_i)^{I} \in (C')^{I} $ in $\emme$.

$\emme$ satisfies TBox: for elements $X \in \Delta_0$, this can be proven as in
Theorem \ref{Theorem_RC_TBox}.
For $\Delta_1$ this holds since it held in $\emme'$. For the inclusion $C_l
\sqsubseteq C_j$ this is obvious. For $\tip(C_l) \sqsubseteq C_j$, for all
$a_i$ we reason as follows. First of all, if $k_j(a_i) >$ rank($C_l$) then $a_i
\not \in Min_<({C_l}^I)$ and the inclusion trivially holds. On the other if
$k_j(a_i) \leq$ rank($C_l$), $(\neg C_l \sqcup C_j )(a_i) \in \mu^j$, and
therefore $(a_i)^{I'} \in (\neg C_l \sqcup C_j )^{I'}$ in $\emme'$, hence
$(a_i)^{I} \in (\neg C_l \sqcup C_j )^{I}$ in $\emme$, and we are done.

$C(a)$ does not hold in $\emme$, since it does not hold in $\emme'$.
Last, $\emme$ is minimal: if it was not so there would be $\emme' < \emme$.
However it can be  shown that we could define a $k_{j'}$ consistent with
($\overline{\mathit{TBox}} ,$ ABox) and preferred to $k_{j}$, thus
contradicting the minimality of $k_j$, against the hypothesis.
We have then built a minimal canonical model of $K$ in which $C(a)$ does not
hold. The theorem follows by contraposition.
\hfill $\blacksquare$
\end{proof}

\begin{example}
\emph{
Consider the standard  penguin example. Let $K$ = (TBox, ABox), where TBox = $\{\tip (B) \sqsubseteq F, \tip (P) \sqsubseteq \neg F, P \sqsubseteq B\}$,  and ABox = $\{P(i), B(j)\}$.}

\emph{Computing the ranking of concepts we get that $rank(B) = 0$, $rank(P) = 1$, $rank(B\sqcap \neg F) = 1$, 
$rank(P\sqcap F) = 2$. It is easy to see that a rank assignment $k_0$ with $k_0(i) = 0$ is inconsistent with  $K$ as $\mu^0_i$ would contain $(\neg P \sqcup B)(i)$ ,  $(\neg B \sqcup F)(i)$, $(\neg P \sqcup \neg F)(i)$ and $P(i)$. Thus we are left with only two ranks $k_1$ and $k_2$ with respectively $k_1(i) = 1, k_1(j) = 0$  and $k_2(i) =  k_2(j) = 1$.}

\emph{The set $\mu^1$ contains, among the others, $(\neg P \sqcup \neg F)(i)$ ,  $(\neg B \sqcup F)(j)$.
It is tedious but esay to check that $K \cup \mu^1$ is consistent and it is the only minimal consistent one (being $k_1$ preferred to $k_2$,   thus both $\neg F(i)$ and $F(j)$ belong to $\overline{\mathit{ABox}}$.}
\end{example}

\begin{example}
\emph{This example shows the need of considering multiple ranks of individual constants:
normally  computer science courses (CS) are  taught only by  academic members (A), whereas  business courses (B) are taught only by  consultants (C), consultants and academics are disjointed, this gives the following TBox:}
\emph{$\tip (CS) \sqsubseteq \forall taught. A$, $\tip (B) \sqsubseteq \forall taught. C$, $C \sqsubseteq \lnot A$. Suppose  the ABox contains: $CS(c1)$, $B(c2)$, $taught(c1,joe)$, $taught(c2,joe)$ and let $K$= (TBox, ABox).}
\emph{Computing rational closure of TBox, we get that all atomic concepts have rank 0. Any rank assignment $k_i$ with $k_i(c1) = k_i(c2)= 0$, is  inconsistent with $K$ since the respective $\mu^i$ will contain both $(\neg CS \sqcup \forall taught. A)(c1)$ and $(\neg B \sqcup \forall taught. C)(c2)$, from which both $C(joe)$ and $A(joe)$ follow, which gives an inconsistency.}

\emph{There are two minimal consistent ranks:  $k_1$, such that $k_1(joe) = 0, k_1(c1) = 0, k_1(c2) = 1$, and $k_2$, such that $k_2(joe) = 0, k_2(c1) = 1, k_2(c2) = 0$. We have that
 ABox $\cup \mu^1 \models A(joe)$ and ABox $\cup \mu^2 \models C(joe)$. According to the skeptical definition of $\overline{\mathit{ABox}}$ neither $A(joe)$,  nor $C(joe)$ belongs to $\overline{\mathit{ABox}}$, however $(A \sqcup C)(joe)$ belongs to $\overline{\mathit{ABox}}$.}
\end{example}

\vspace{0.1cm}

\noindent Let us conclude this section by estimating the complexity of computing the rational closure of the ABox:

\begin{theorem}[Complexity of rational closure over the ABox]
Given a knowledge base $K=$(TBox,ABox), an individual constant $a$ and a concept $C$, the problem of deciding whether $C(a) \in \overline{\mathit{ABox}}$ is \textsc{ExpTime}-complete.
\end{theorem}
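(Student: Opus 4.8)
The plan is to prove the two matching bounds separately: membership in \textsc{ExpTime} by a cost analysis of the algorithm underlying Definition~\ref{Rational Closure ABox}, and \textsc{ExpTime}-hardness by exhibiting ordinary $\alc$ reasoning as a special case of the problem.

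For the upper bound I would first bound the cost of computing $\overline{\mathit{TBox}}$. The exceptionality sequence $E_0 \supseteq E_1 \supseteq \dots$ is non-increasing and, since $K$ is finite, stabilises after at most $|K|$ steps. Building $E_i$ from $E_{i-1}$ requires, for each \tip-inclusion $\tip(C) \sqsubseteq D \in E_{i-1}$, a test of whether $C$ is exceptional for $E_{i-1}$, i.e.\ whether $E_{i-1} \modelsalctr \tip(\top) \sqsubseteq \neg C$; by Theorem~\ref{complexityalctr} each such test is in \textsc{ExpTime}. Since there are only polynomially many inclusions and polynomially many steps, all ranks $\rf(\cdot)$, and hence $\overline{\mathit{TBox}}$, are computed in \textsc{ExpTime}. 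Next I would enumerate the rank assignments: writing $m$ for the number of individuals named in the ABox and $n$ for the stabilisation index of the sequence, both bounded by $|K|$, there are at most $(n{+}1)^m \leq (|K|{+}1)^{|K|} = 2^{O(|K|\log|K|)}$ assignments, i.e.\ at most exponentially many. For each assignment $k_j$ the set $\mu^j$ is computed in polynomial time from the already-known ranks, and deciding whether $\mathit{ABox} \cup \mu^j$ is consistent in $\alc$ is a single \textsc{ExpTime} test; selecting the minimal consistent assignments is then a comparison among at most exponentially many candidates. Finally, to decide $C(a) \in \overline{\mathit{ABox}}$ I would, for each minimal consistent $k_j$, run the \textsc{ExpTime} test $\mathit{ABox} \cup \mu^j \models_{\alc} C(a)$ and answer ``yes'' iff all of them succeed. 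The total running time is a product of an exponential number ($2^{O(|K|\log|K|)}$) of calls to \textsc{ExpTime} subroutines; since $2^{\mathit{poly}} \cdot 2^{\mathit{poly}} = 2^{\mathit{poly}}$, the whole procedure runs in \textsc{ExpTime}.

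For the lower bound I would reduce instance checking in $\alc$ with respect to a general (classical) TBox, which is already \textsc{ExpTime}-hard, to our problem. Given such a knowledge base with a purely classical TBox and a query $C(a)$, take $K$ with the same TBox and ABox and no \tip-inclusions. Then no concept is exceptional, every concept has rank $0$, the unique minimal consistent rank assignment puts every named individual at rank $0$, and $\mu^j$ reduces to the instantiations of the classical inclusions; consequently $C(a) \in \overline{\mathit{ABox}}$ holds exactly when $C(a)$ is an $\alc$ consequence of the original knowledge base. Since membership in $\overline{\mathit{ABox}}$ thus decides $\alc$ instance checking, the problem is \textsc{ExpTime}-hard, and together with the upper bound it is \textsc{ExpTime}-complete.

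The main obstacle is the cost accounting in the upper bound: the algorithm branches over all rank assignments of the named individuals, so one must check that this enumeration stays within the exponential budget (it does, because both the number of individuals and the number of distinct ranks are linear in $|K|$) and that composing exponentially many \textsc{ExpTime} subroutine calls does not escape \textsc{ExpTime}. A secondary point to handle with care in the hardness reduction is to verify that, in the absence of \tip-inclusions, the $\mu^j$-based consistency and entailment tests of Definition~\ref{Rational Closure ABox} coincide with genuine $\alc$ reasoning relative to the TBox, which is exactly what the soundness and completeness results (Theorems~\ref{Theorem_Soundness_ABox} and~\ref{Theorem_Completeness_ABox}) guarantee.
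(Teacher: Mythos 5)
Your upper bound follows essentially the same route as the paper's own proof: the paper, too, counts the exponentially many rank assignments, bounds the size of each $\mu^j$ polynomially, spends one \textsc{ExpTime} $\alc$ consistency test per assignment and exponentially many \textsc{ExpTime} entailment tests at the end, and concludes that composing exponentially many \textsc{ExpTime} subroutines stays within \textsc{ExpTime}. The one thing you add on this side is the explicit cost of computing the ranks themselves (the sequence $E_0 \supseteq E_1 \supseteq \dots$, each exceptionality test discharged by Theorem~\ref{complexityalctr}); the paper leaves that step implicit, so here your argument is the same, just slightly more complete.

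The real divergence is the lower bound: the paper's proof contains no hardness argument at all (it establishes membership only, although the statement claims completeness), so your reduction from $\alc$ instance checking is an addition, and it is the right idea --- but as written it has a gap. For a \tip-free $K$: (i) concepts unsatisfiable w.r.t.\ TBox are exceptional for every $E_i$ and have rank $\infty$, not $0$; (ii) $\overline{\mathit{TBox}}$ is not trivial, since it contains $\tip(C') \sqsubseteq D'$ precisely when TBox $\models_{\alc} C' \sqsubseteq D'$, so $\mu^j$ contains instantiations at named individuals of \emph{all} classical consequences with both sides in $\lingconc$, not just of the TBox axioms; and, crucially, (iii) the test ABox $\cup \mu^j \models_{\alc} C(a)$ in Definition~\ref{Rational Closure ABox} is entailment from a set of assertions alone, so the TBox never constrains anonymous role successors, and it is strictly weaker than classical entailment from (TBox, ABox). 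Concretely, with TBox $= \{A \sqcap B \sqsubseteq \bot\}$ and ABox $= \{(\exists R.(A \sqcap E))(a), (\forall R.(B \sqcup F))(a)\}$, classical reasoning yields $(\exists R.(E \sqcap F))(a)$, but this is not derivable from ABox $\cup \mu^j$ because the needed bridge has the conjunctive antecedent $\exists R.(A \sqcap E) \sqcap \forall R.(B \sqcup F)$, which is not a concept of $\lingconc$. So your main-line claim that membership in $\overline{\mathit{ABox}}$ ``holds exactly when $C(a)$ is an $\alc$ consequence'' cannot be read off the definition; the viable route is the one you mention only as a fallback: Theorems~\ref{Theorem_Soundness_ABox} and~\ref{Theorem_Completeness_ABox} reduce membership to truth in all minimal canonical models, and then you still owe the bridging step that, for \tip-free $K$, truth in all minimal canonical models coincides with classical entailment --- e.g., given a classical countermodel of $C(a)$, take its disjoint union with realizations of all $K$-consistent subsets of $\lingconc$ and put every element at rank $0$; this is a minimal canonical model of $K$ falsifying $C(a)$. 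With that repair your hardness argument, and hence the completeness claim the paper itself never proves, goes through.
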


\begin{proof}
Let $| K|$ be the size of the knowledge base $K$ and let the size of the query be $O(| K|)$.
As the number of inclusions in the knowledge base is $O(|K|)$,
then the number $n$ of non-increasing subsets $E_i$ in the construction of the rational closure is  $O(|K|)$.
Moreover, the number $k$ of named individuals in the knowledge base is  $O(|K|)$.
Hence, the number  $k^n$ of different rank assignments to individuals is such that both $k$ and $n$ are  $O(|K|)$.
Observe that $k^n = 2^{ Log \;k^n} = 2^{n Log\; k}$. Hence, $k^n$ is $O(2^{nk})$, with $n$ and $k$ linear
in $| K|$, i.e., the number of different rank assignments  is exponential in $|K|$.

To evaluate the complexity of the algorithm for computing the rational closure of the ABox, observe that:

\noindent (i) For each $j$, the number of sets $\mu_i^j$ is $k$ (which is linear in $|K|$).
The number of inclusions in each $\mu_i^j$ is $O(|K|^2)$, as the size of $\lingconc$ is $O(|K|)$ and the number of \tip-inclusions 
$\tip(C) \sqsubseteq D \in \overline{\mathit{TBox}}$, with $C, D \in \lingconc$ is $O(|K|^2)$.
Hence, the size of set $\mu^j$ is $O(|K|^3)$.

\noindent (ii) For each $k_j$, the consistency of
($\overline{\mathit{TBox}}$, ABox) can be verified by checking the consistency of ABox $\cup \mu^j$  in $\alc$, which requires
exponential time in the size of the set of formulas
ABox $\cup \mu^j$ (which, as we have seen, is polynomial in the size of $K$).
Hence, the consistency of each $k_j$ can be verified  in exponential time in the size of $K$.

\noindent (iii) The identification of the minimal assignments $k_j$ among the consistent ones requires the comparison
of each consistent assignment with each other (i.e. $k^2$ comparisons),
where each comparison between $k_j$ and $k_{j'}$ requires $k$ steps.
Hence, the identification of the minimal assignments requires $k^3$ steps.

\noindent (iv) To define the rational closure $\overline{\mathit{ABox}}$ of ABox, for each concept  $C$
occurring in $K$ or in the query (there are $O(|K|)$ many concepts),
and for each named individual $a_i$, we have to check if $C(a_i)$ is derivable
in $\alc$ from ABox $\cup \mu^j$
for all minimal consistent rank assignments $k_j$.
As the number of different  minimal consistent assignments $k_j$ is exponential in $|K|$,
this requires  an exponential number of checks, each one requiring exponential time
in the size of the knowledge base $|K|$. The cost of the overall algorithm is therefore
exponential in the size of the knowledge base.
\hfill $\blacksquare$
\end{proof}

\section{Conclusions and Related works}\label{sez:conclusions}
\vspace{-0.1cm}
We have defined a rational closure construction for the Description Logic $\alc$ extended with a tipicality operator and provided a minimal
model semantics for it based on the idea of minimizing the rank of objects in the domain, that is their level of ``untypicality''.
This semantics correspond to a natural extension to DLs of Lehmann and Magidor's notion of
rational closure. We have also extended the notion of rational closure to the ABox, by providing an algorithm for computing it that is sound and complete with respect to the minimal model semantics. Last, we have shown an \textsc{ExpTime} upper bound for the algorithm.


In future work, we will consider a further ingredient in the recipe for nonmonotonic DLs.
 In analogy with circumscription, we can
consider a stronger form of minimization where we minimize the
rank of domain elements, but \emph{we allow to vary} the 
extensions of concepts.
Nonmonotonic extensions of \emph{low complexity} DLs based on the \tip \ operator have been recently provided \cite{ijcai2011}. In future works, we aim to study the application of the proposed semantics to DLs of the $\mathcal{EL}$ and DL-Lite families, in order to define a rational closure  for low complexity DLs.

\cite{Casinistraccia2010} discusses the application of rational closure to DLs. The authors first describe a construction to compute rational closure in the context of propositional logic, then they adapt such a construction to the DL $\alc$. As \cite{Casinistraccia2010} extends to DLs a construction which, in the propositional case, is proved to be equivalent to Lehmann and Magidor's rational closure, it may be conjectured that their construction is equivalent to our definition of rational closure in Section \ref{sez:rc}, which is the natural extension of Lehmann and Magidor's definition. 
\cite{Casinistraccia2010} keeps the ABox into account, and defines closure operations over individuals. 
They introduce a consequence relation $\Vdash$ among a KB and assertions, under the requirement that the TBox is unfoldable and
 the ABox is closed under completion rules, such as, for instance, that if 
$a: \esiste R.C \in$ ABox, then both $aRb$ and $b: C$ (for some individual constant $b$) must belong to the ABox too. 
Under such restrictions they are able to define a procedure to  compute the rational closure of the ABox assuming that the individuals explicitly named are linearly ordered, and different orders determine different sets of consequences. They show that, for each order $s$, the consequence relation $\Vdash_s$ is rational and can be computed in \textsc{PSpace}.
 In a subsequent work \cite{Casinistraccia2011}, the authors introduce an approach based on the combination of rational closure and \emph{Defeasible Inheritance Networks} (INs).

The logic $\alctr$ we consider as our base language is equivalent to the logic for defeasible subsumptions in
DLs proposed by \cite{sudafricaniKR}, when considered with $\alc$
as the underlying DL. The idea underlying this approach is very similar to that of $\alctr$: some objects in the domain are more typical than
others. In the approach by \cite{sudafricaniKR}, $x$ is more
typical than $y$ if $x \geq y$. The properties of $\geq$ correspond to those of $<$ in $\alctr$. At a
syntactic level the two logics differ, so that in
\cite{sudafricaniKR} one finds the defeasible inclusions $C \
\subsud \ D$ instead of $\tip(C) \sqsubseteq D$ of $\alctr$, however it has be shown in \cite{ecai2010DLs}
that the logic of preferential subsumption can be translated into
$\alctr$ by replacing $C \ \subsud \ D$ with $\tip(C) \sqsubseteq D$.

In \cite{sudafricanisemantica} the semantics of the logic of defeasible inclusions is strenghtened by a preferential semantics. 
Intuitively, given a TBox, the authors first introduce a preference ordering $\ll$ on the class of all
subsumption relations $\subsud$ including TBox, then they define the rational closure of TBox as the most preferred relation $\subsud$ w.r.t. $\ll$, i.e. such that there is no other relation $\subsud'$ such that TBox $\incluso \subsud'$ and $\subsud' \ll \subsud$.
Furthermore, the authors describe an \textsc{ExpTime} algorithm in order to compute the rational closure of a given TBox. However, they do not address the problem of dealing with the ABox. \hide{This algorithm is essentially equivalent to our definition of rational closure for a TBox (Definition \ref{def:rational closureDL}), from which it is straightforward to conclude that the semantics $\mathit{FIMS}$ we have proposed is sound and complete also with respect to the semantics for rational closure proposed in \cite{sudafricanisemantica}, if we restrict our concern to the TBox.}
 In \cite{sudafricaniprotege} a plug-in for the Prot\'eg\'e ontology editor 
implementing the mentioned algorithm for computing the rational closure for a TBox for OWL ontologies is described.

%
%

\hide{
\section*{Acknowledgments}
The work has been partially supported by the projects ``MIUR PRIN08 LoDeN: Logiche Descrittive Nonmonotone: Complessit\'a e implementazioni'' and ``ICT4Law - ICT Converging on Law: Next Generation Services for Citizens, Enterprises, Public Administration and Policymakers''.
}

\bibliography{biblioijcai2013}


\end{document}